\else\usepackage[utf8]{inputenc}\fi
\crefname{section}{Sec.}{Secs.}
\Crefname{section}{Section}{Sections}
\crefname{figure}{Fig.}{Figs.}
\Crefname{figure}{Figure}{Figures}
\crefname{equation}{}{}
\Crefname{equation}{Equation}{Equations}
\tikzset{>=latex}
\tikzset{
 font={\fontsize{10pt}{12}\selectfont}}
\pgfplotsset{every tick label/.append style={font=\small}}
\pgfplotsset{legend style={font=\small}}
\pgfplotsset{every axis label/.append style={font=\small}}
\title{\LARGE \bf
Temporally Coupled Dynamical Movement Primitives \\ in Cartesian Space}
\author{
\centering Martin Karlsson* \quad Anders Robertsson \quad Rolf Johansson
\thanks{* The authors work at the Department of Automatic Control, Lund University, PO Box 118,
SE-221 00 Lund, Sweden.\protect\\
{Martin.Karlsson@control.lth.se\protect\\
The research leading to these results has received funding from the Vinnova project \textit{Kirurgens Perspektiv}. The authors are members of the LCCC Linnaeus Center and the ELLIIT Excellence Center at Lund University.}}
}
\begin{document}
\newcommand{\cmt}[1]{{\color{red}{\textbf{Comment:} #1}}}

\newtheorem{proposition}{Proposition}
\newtheorem{theorem}{Theorem}
\newtheorem{lemma}{Lemma}
\newtheorem{remark}{Remark}
\newtheorem{definition}{Definition}


\maketitle
\thispagestyle{empty}
\pagestyle{empty}
\begin{abstract}
Control of robot orientation in Cartesian space implicates some difficulties, because the rotation group SO(3) is not contractible, and only globally contractible state spaces support continuous and globally asymptotically stable feedback control systems. In this paper, unit quaternions are used to represent orientations, and it is first shown that the unit quaternion set minus one single point is contractible. This is used to design a control system for temporally coupled dynamical movement primitives (DMPs) in Cartesian space. The functionality of the control system is verified experimentally on an industrial robot.
\end{abstract}

\section{Introduction}
Industrial robots typically work well for tasks where accurate position control is sufficient, and where work spaces and robot programs have been carefully prepared, so that hardware configurations can be foreseen a priori by robot programmers in each step of the tasks. Such preparation is very time consuming, and introduces high costs in terms of engineering work. Further, the arrangements are sensitive to variations, \textit{e.g.}, uncertainties in work object positions, small differences between individual work objects, \textit{etc.} This has prohibited the automation of a range of tasks, including seemingly repetitive ones such as assembly tasks and short-series production.

It would therefore be beneficial if the capabilities of robots to adapt to their surroundings could be improved. The framework of dynamical movement primitives (DMPs), used to model robot movement, has an emphasis on such adaptability \cite{ijspeert2013dynamical}. For instance, the time scale and goal position of a movement can be adjusted through one parameter each. The fundamentals of DMPs have been described in \cite{ijspeert2013dynamical}, and earlier versions have been introduced in \cite{schaal2000nonlinear,ijspeert2002humanoid,ijspeert2003learning}. DMPs have been used to modify robot movement based on moving targets in the context of object handover \cite{prada2014handover}, and based on demonstrations by humans \cite{karlsson2017autonomous,karlsson2017motion,chiara2018passivity,karlsson2019human}. In most of the previous research, it has been assumed that the robot configuration space is a real coordinate space, such as joint space or Cartesian position space; see, \textit{e.g.}, \cite{prada2014handover,karlsson2017autonomous,chiara2018passivity,papageorgiou2018sinc,yang2018learning}. However, in \cite{ude2014orientation} DMPs were formulated for orientation in Cartesian space.

Temporal coupling for DMPs enables robots to recover from unforeseen events, such as disturbances or detours based on sensor data. This concept was introduced in \cite{ijspeert2013dynamical}, was made practically realizable in \cite{karlsson2017dmp}, and proven exponentially stable in \cite{karlsson2018convergence}. However, these previous results are applicable only if the robot state space is Euclidean, which is not true for orientation in Cartesian space. Higher levels of robot control typically operate in Cartesian space, for instance to control the pose of a robot end-effector or an unmanned aerial vehicle.

In this paper, we therefore address the question of whether the control algorithm in \cite{karlsson2017dmp} could be extended also to incorporate orientations. Because a contractible state space is necessary for design and analysis of a continuous globally asymptotically stable control law (see \cref{sec:contractible}), we first investigate the contractibility properties of the quaternion set used to represent orientations. A space is contractible if and only if it is homotopy equivalent to a one-point space \cite{hatcher2002algebraic}, which intuitively means that the space can be deformed continuously to a single point; see, \textit{e.g.}, \cite{hatcher2002algebraic} for a definition of homotopy equivalence.

\subsection{Contribution}
This paper provides a control algorithm for DMPs with temporal coupling in Cartesian space. It extends our previous research in \cite{karlsson2017dmp,karlsson2018convergence} by including orientation in Cartesian space. Equivalently, it extends \cite{ude2014orientation} by including temporal coupling. Furthermore, it is shown that the quaternion set minus one single point is contractible, which is a necessary property for design of a continuous and globally asymptotically stable control algorithm. Finally, the theoretical results are verified experimentally on an ABB YuMi robot; see \cref{fig:yumi_gore_tex} and \cite{yumi}.

\begin{figure}
\centering
\includegraphics[width=0.98\columnwidth]{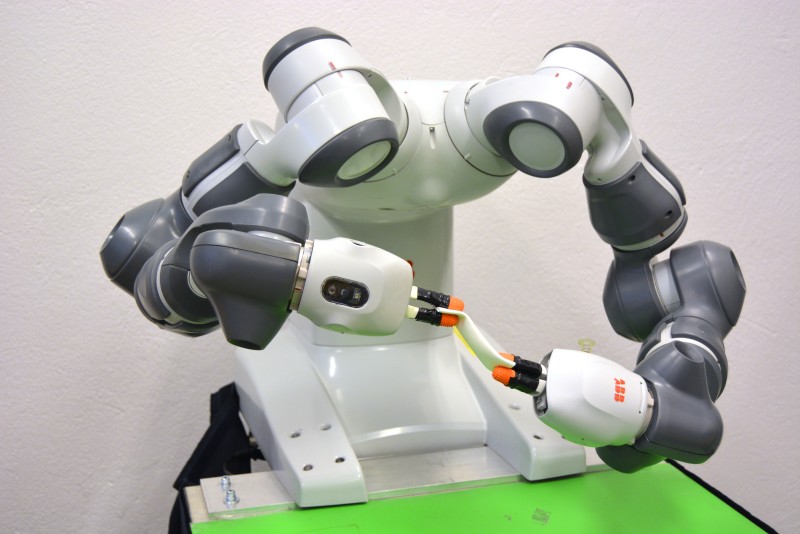}
\caption{The ABB YuMi robot \cite{yumi} used in the experiments.}
\label{fig:yumi_gore_tex}
\end{figure}

\pagebreak

\section{A Contractible Subset \\ of the Unit Quaternion Set}
\label{sec:contractible}
The fundamentals of mathematical topology and set theory are described in, \textit{e.g.}, \cite{hatcher2002algebraic,crossley2006essential,schwarz2013topology}. As noted in \cite{mayhew2011quaternion}, the rotation group SO(3) is not contractible, and therefore it is not possible for any continuous state-feedback control law to yield a globally asymptotically stable equilibrium point in SO(3) \cite{bhat2000topological,koditschek1988application}. Contractibility is also necessary to apply the contraction theory from \cite{lohmiller1998contraction}, as done in \cite{karlsson2018convergence}. In this paper, unit quaternions are used to parameterize SO(3). Similarly to SO(3), the unit quaternion set, $\mathbb{H}$, is not contractible. In this section however, is is shown that it is sufficient to remove one point from $\mathbb{H}$ to yield a contractible space. \cref{table:notation} lists some of the notation used in this paper.

\begin{table}
\begin{center}
\caption{Notation used in this paper. All quaternions represent orientations and are therefore of unit length. For such quaternions, the inverse is the same as the conjugate.} \label{table:notation}
\begin{tabular}{l l l}
Notation  &  & Description \\
\hline
$\mathbb{H}$ &   & Unit quaternion set \\
$\mathbb{S}^n$ &  $\in \mathbb{R}^{n+1}$ & Unit sphere of dimension $n$ \\
$y_a$ & $\in \mathbb{R}^3$ & Actual robot position \\
$g$ & $\in \mathbb{R}^3$ & Goal position \\
$y_c$ & $\in \mathbb{R}^3$ & Coupled robot position \\
$q_a$ & $\in \mathbb{H}$ & Actual robot orientation \\
$q_g$ & $\in \mathbb{H}$ & Goal orientation \\
$q_c$ & $\in \mathbb{H}$ & Coupled robot orientation \\
$\omega_c$ & $\in \mathbb{R}^3$ & Coupled angular velocity\\
$q_0$ & $\in \mathbb{H}$ & Initial robot orientation \\
$\mathfrak{h}$  & & Quaternion difference space \\
$d_{cg}$ & $\in \mathfrak{h}$  & Difference between $q_c$ and $q_g$   \\
$z, \omega_z$ & $\in \mathbb{R}^3$ & DMP states \\
$\alpha_z, \beta_z, k_v, k_p$ & $\in \mathbb{R}^+$ & Constant control coefficients \\
$\tau$ & $\in \mathbb{R}^+$ & Nominal DMP time constant \\
$\tau_a$ & $\in \mathbb{R}^+$ & Adaptive time parameter \\
$x$ & $\in \mathbb{R}^+$ & Phase variable \\
$\alpha_x, \alpha_e, k_c$ & $\in \mathbb{R}^+$ & Positive constants \\
$f(x)$ & $\in \mathbb{R}^6$  & Learnable virtual forcing term \\
$f_p(x), f_o(x)$ & $\in \mathbb{R}^3$ & Position and orientation components \\
$N_b$ & $\in \mathbb{Z}^+$ & Number of basis functions \\
$\Psi_j(x)$ & $\in \mathbb{R}^6$ & The $j$:th basis function vector \\
$w_j$ & $\in \mathbb{R}^6$ & The $j$:th weight vector \\
$e$ & $\in \mathbb{R}^3 \times \mathfrak{h} $ & Low-pass filtered pose error \\
$e_p$ & $\in \mathbb{R}^3$ & Position component of $e$ \\
$e_o$ & $\in \mathfrak{h}$ & Orientation component of $e$ \\
$\ddot{y}_r, \dot{\omega}_r$ & $\in \mathbb{R}^3$ & Reference robot acceleration \\
$\xi$ &  $ \in \mathbb{R}^{22} \times \mathfrak{h}^3$ & DMP state vector \\
$\bar{q}$ & $\in \mathbb{H}$ & Inverse of quaternion $q$ \\
$\simeq $ & & Homotopy equivalence \\
$\cong$ & & Homeomorphic relation \\
\hline
\end{tabular}
\end{center}
\end{table}

\subsection{Preliminary topology}
We will use that homeomorphism (defined in, \textit{e.g.}, \cite{crossley2006essential}) is a stronger relation than homotopy equivalence.
\begin{lemma}
\label{lm:homeomorphism}
If two spaces $X$ and $Y$ are homeomorphic, then they are homotopy equivalent.
\end{lemma}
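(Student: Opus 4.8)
The plan is to argue directly from the definitions, since homotopy equivalence is strictly weaker than homeomorphism: it replaces the requirement that the two composite maps \emph{equal} the respective identity maps by the weaker requirement that they merely be \emph{homotopic} to them. First I would recall that $X \cong Y$ means there is a continuous bijection $f\colon X \to Y$ whose inverse $f^{-1}\colon Y \to X$ is also continuous, so that $f^{-1} \circ f = \mathrm{id}_X$ and $f \circ f^{-1} = \mathrm{id}_Y$ hold as genuine equalities of functions.

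Next I would take $f$ and $f^{-1}$ themselves as the candidate pair of maps witnessing homotopy equivalence. To conclude $X \simeq Y$ it then suffices to show that $f^{-1} \circ f$ is homotopic to $\mathrm{id}_X$ and that $f \circ f^{-1}$ is homotopic to $\mathrm{id}_Y$. But these composites are literally equal to the respective identity maps, and any continuous map is homotopic to itself via the constant-in-time homotopy $H(x,t) = \mathrm{id}_X(x)$ on $X$ (and analogously on $Y$). This $H$ is continuous on $X \times [0,1]$ because it factors through the projection $X \times [0,1] \to X$ followed by the identity, so both required homotopy relations hold and $X \simeq Y$ follows.

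I expect no genuine obstacle here: the entire content of the lemma is the single observation that equality of maps implies homotopy of maps, so a homeomorphism automatically furnishes the data demanded by the definition of a homotopy equivalence. The only point worth stating with care is the continuity of the trivial constant homotopy, which is immediate from the factorization above; everything else is a direct unwinding of the two definitions.
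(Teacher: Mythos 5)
Your proof is correct and complete. Note, however, that the paper does not actually supply an argument of its own here: its entire ``proof'' is a citation to Lemma~6.11 of \cite{crossley2006essential}. So your proposal differs from the paper not in mathematical substance but in self-containedness: you unwind the two definitions and observe that a homeomorphism $f$ together with $f^{-1}$ already satisfies the conditions for a homotopy equivalence, because the composites $f^{-1}\circ f = \mathrm{id}_X$ and $f\circ f^{-1} = \mathrm{id}_Y$ hold as equalities, and equal maps are homotopic via the constant homotopy $H(x,t)=x$ (continuous since it factors through the projection $X\times[0,1]\to X$). This is precisely the standard textbook argument that the cited lemma encapsulates, so the two routes coincide mathematically; what your version buys is that the lemma is established from first principles rather than deferred to an external reference, while the paper's citation keeps the exposition short. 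Your care in verifying continuity of the trivial homotopy is the only nontrivial checkpoint, and you handle it correctly.
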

\begin{proof}
See Lemma 6.11 in \cite{crossley2006essential}.
\end{proof}

\begin{lemma}
\label{lm:remove_point}
Assume that $X \cong Y$, with a homeomorphism $f: X \rightarrow Y$. Then $X$ minus a point $p \in X$, denoted $X\setminus p$, is homeomorphic to $Y\setminus f(p)$.
\end{lemma}
\begin{proof}
Consider the function $f_2: X\setminus p \rightarrow Y\setminus f(p)$, and let $f_2(x) = f(x) \hspace{2mm} \forall x \in X\setminus p$. It can be seen that $f_2$ is a restriction of $f$. Since a restriction of a homeomorphism is also a homeomorphism \cite{lehner1964discontinuous}, $f_2$ is a homeomorphism, and hence $X\setminus p \cong Y\setminus f(p)$.
\end{proof}

We will also use that homeomorphism preserves contractibility.
\begin{lemma}
\label{lm:contract_homeomorphism}
If $X \cong Y$, and $X$ is contractible, then $Y$ is also contractible.
\end{lemma}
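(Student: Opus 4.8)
The plan is to reduce the claim directly to the characterization of contractibility stated in the introduction, namely that a space is contractible if and only if it is homotopy equivalent to a one-point space. Under this characterization the lemma amounts to showing that homotopy equivalence to a point is transported along a homeomorphism, which follows immediately once we combine the fact that homeomorphism implies homotopy equivalence with the fact that homotopy equivalence is itself an equivalence relation.

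First I would unpack the hypothesis: since $X$ is contractible, there is a one-point space $\{*\}$ with $X \simeq \{*\}$. Next I would apply \cref{lm:homeomorphism} to the hypothesis $X \cong Y$ to obtain $X \simeq Y$, and then invoke symmetry of homotopy equivalence to write $Y \simeq X$. Finally, chaining these via transitivity yields $Y \simeq X \simeq \{*\}$, hence $Y \simeq \{*\}$, which is exactly the statement that $Y$ is contractible. The argument is thus a short two-step chain that requires no explicit construction of homotopies.

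The only point requiring a little care — and the closest thing to an obstacle — is justifying that homotopy equivalence is an equivalence relation, in particular that it is symmetric and transitive; this is standard and can be cited from the referenced topology texts \cite{hatcher2002algebraic,crossley2006essential}. A more self-contained alternative would avoid appealing to this fact and instead compose maps by hand: writing $f: X \to Y$ for the homeomorphism and transporting a null-homotopy of the identity on $X$ through $f$ and $f^{-1}$ to produce a null-homotopy of the identity on $Y$. Since this merely re-derives \cref{lm:homeomorphism} together with transitivity, I would prefer the shorter compositional argument above.
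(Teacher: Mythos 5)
Your proposal is correct and follows essentially the same route as the paper's own proof: apply \cref{lm:homeomorphism} to obtain $X \simeq Y$, then chain with $X \simeq \{*\}$ to conclude $Y \simeq \{*\}$. Your version merely makes explicit the symmetry and transitivity of homotopy equivalence, which the paper uses implicitly.
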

\begin{proof}
Since $X \cong Y$, they are homotopy equivalent according to \cref{lm:homeomorphism}. In turn, $X$ is contractible and therefore homotopy equivalent to a one-point space. Hence $Y$ is also homotopy equivalent to a one-point space, and therefore contractible.
\end{proof}

\subsection{The quaternion set minus one point is contractible}
First, it will be shown that the unit sphere $\mathbb{S}^n$ (see \cref{def:unit_sphere}) minus a point is contractible. This will then be applied to $\mathbb{H}$, which is homeomorphic to $\mathbb{S}^3$ \cite{lavalle2006planning}.
\begin{definition}
\label{def:unit_sphere}
Let $n$ be a non-negative integer. The unit sphere with dimension $n$ is defined as
\begin{equation}
\mathbb{S}^n = \left\lbrace p \in \mathbb{R}^{n+1} \hspace{2mm} \mid \hspace{2mm}    \left\lVert p \right\rVert _2 = 1 \right\rbrace
\end{equation}
\end{definition}

\begin{theorem}
\label{thm:sphere_n}
The unit sphere $\mathbb{S}^n$ minus a point $p \in \mathbb{S}^n$, denoted $\mathbb{S}^n\setminus{p}$, is contractible. 
\end{theorem}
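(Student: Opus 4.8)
The plan is to exploit stereographic projection, which identifies the punctured sphere with a Euclidean space, and then to invoke the fact that homeomorphism preserves contractibility (\cref{lm:contract_homeomorphism}). Concretely, I would first reduce to the case where the removed point is the north pole. Since the rotation group acts transitively on $\mathbb{S}^n$, for any $p \in \mathbb{S}^n$ there is a rotation $R \in \mathrm{SO}(n+1)$ with $R(p) = N$, where $N = (0,\dots,0,1) \in \mathbb{R}^{n+1}$ denotes the north pole. Such a rotation restricts to a homeomorphism of $\mathbb{S}^n$, so by \cref{lm:remove_point} we obtain $\mathbb{S}^n \setminus p \cong \mathbb{S}^n \setminus N$. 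It therefore suffices to prove the statement for $p = N$.

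Next, I would write down the stereographic projection $\sigma: \mathbb{S}^n \setminus N \rightarrow \mathbb{R}^n$ explicitly, sending a point $(x_1, \dots, x_{n+1}) \neq N$ to $\tfrac{1}{1 - x_{n+1}}(x_1, \dots, x_n)$, together with its inverse. The key step is to verify that $\sigma$ is a homeomorphism: it is a bijection because it maps each point to the intersection of the line through $N$ and that point with the equatorial hyperplane, and both $\sigma$ and $\sigma^{-1}$ are given by rational expressions that are continuous once $N$ has been removed (the denominator $1 - x_{n+1}$ never vanishes on $\mathbb{S}^n \setminus N$). This establishes $\mathbb{S}^n \setminus N \cong \mathbb{R}^n$.

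Finally, I would note that $\mathbb{R}^n$ is contractible: the straight-line homotopy $H(x, t) = (1 - t)x$ continuously deforms the identity map into the constant map at the origin, which is precisely the definition of contractibility. Combining this with \cref{lm:contract_homeomorphism} and the homeomorphism $\mathbb{S}^n \setminus N \cong \mathbb{R}^n$, we conclude that $\mathbb{S}^n \setminus N$ is contractible, and hence, by the reduction above, that $\mathbb{S}^n \setminus p$ is contractible for every $p \in \mathbb{S}^n$.

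The main obstacle I anticipate is the verification that stereographic projection is genuinely bicontinuous, and in particular exhibiting the explicit inverse and confirming that neither map introduces a discontinuity after the pole has been excised. The reduction to the north pole via transitivity of the rotation group and the contractibility of $\mathbb{R}^n$ are both routine once the preliminary lemmas are in hand.
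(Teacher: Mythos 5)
Your argument follows essentially the same route as the paper's proof: stereographic projection gives $\mathbb{S}^n\setminus p \cong \mathbb{R}^n$, Euclidean space is contractible, and \cref{lm:contract_homeomorphism} transfers contractibility across the homeomorphism. The details you add beyond the paper --- the explicit formula for $\sigma$ and its inverse, and the straight-line homotopy $H(x,t)=(1-t)x$ on $\mathbb{R}^n$ --- are correct and make the argument more self-contained, and the preliminary reduction to the north pole via \cref{lm:remove_point} is a legitimate alternative to projecting from $p$ directly, as the paper does.

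There is, however, one concrete gap: the case $n=0$, which lies within the scope of the theorem because \cref{def:unit_sphere} allows any non-negative integer $n$. Your reduction step fails there: $\mathrm{SO}(1)$ is the trivial group, so it does \emph{not} act transitively on the two-point set $\mathbb{S}^0=\{-1,1\}$, and no rotation carries a given $p$ to the north pole. The paper sidesteps this by splitting off $n=0$ as a separate, trivial case: removing a point from $\mathbb{S}^0$ leaves a one-point space, which is contractible by definition. You can patch your proof the same way, or by replacing $\mathrm{SO}(n+1)$ with the full orthogonal group $\mathrm{O}(n+1)$, which does act transitively on $\mathbb{S}^n$ for every $n\geq 0$ (a reflection swaps the two points of $\mathbb{S}^0$). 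For the paper's application only $n=3$ is needed, so the omission is harmless in context, but as a proof of the theorem as stated it leaves the $n=0$ case uncovered.
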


\begin{proof}
Consider first the case $n \geq 1$. There exists a mapping from $\mathbb{S}^n\setminus{p}$ to $\mathbb{R}^n$ called stereographic projection from $p$, which is a homeomorphism. Thus, $\mathbb{S}^n\setminus{p} \cong \mathbb{R}^n$ \cite{huggett2009topological,schwarz2013topology}. See \cref{fig:sphere_vs_plane} for a visualization of these spaces. Since $\mathbb{R}^n$ is a Euclidean space it is contractible, and it follows from \cref{lm:contract_homeomorphism} that $\mathbb{S}^n\setminus{p}$ is also contractible.

Consider now the case $n=0$. The sphere $\mathbb{S}^0$ consists of the pair of points $\{-1,1\}$ according to \cref{def:unit_sphere}. Thus $\mathbb{S}^0\setminus{p}$ consists of one point only, and homotopy equivalence with a one-point space is trivial. Hence $\mathbb{S}^0\setminus{p}$ is contractible.
\end{proof}

\begin{remark}
Albeit we consider unit spheres in this paper, it is not necessary to assume radius 1 in \cref{thm:sphere_n}. Further, it is arbitrary which point $p \in \mathbb{S}^n$ to remove.
\end{remark}

\begin{figure}
	\input{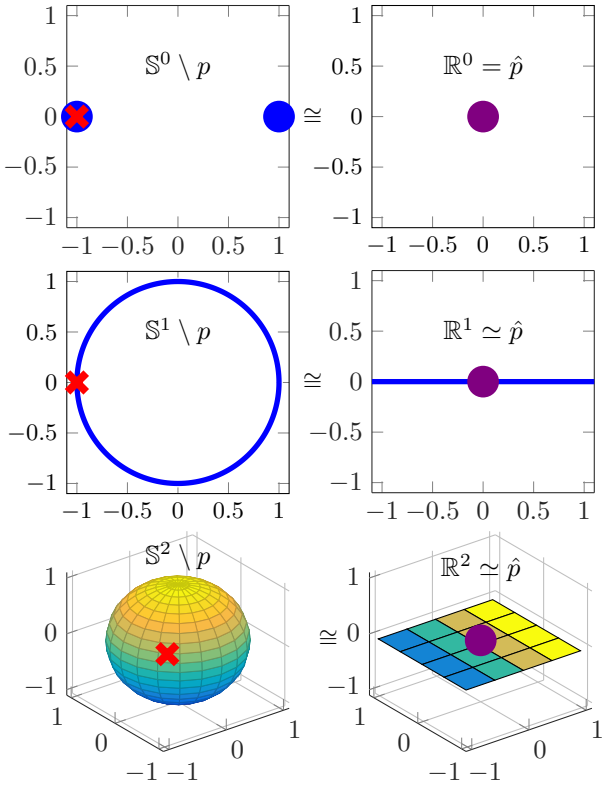}
	\caption{Visualization of $\mathbb{S}^n\setminus p$ (left) and $\mathbb{R}^n$ (right) for \\ $n=0,1,2$. The red cross marks a point $p$ removed from the unit sphere. Each space to the left is homeomorphic to the corressponding space to the right, \emph{i.e.}, $\mathbb{S}^n\setminus p \cong \mathbb{R}^n$. In turn, $\mathbb{R}^n$ is homotopy equivalent to a point (for instance $\hat{p}$ marked by a purple dot in each plot to the right) and therefore $\mathbb{S}^n\setminus p$ is contractible according to \cref{lm:contract_homeomorphism}. Higher dimensions are difficult to visualize, and therefore $\mathbb{S}^2$ is commonly used to visualize parts of the quaternion set, as done in \cref{fig:result_quat_spheres}.} \label{fig:sphere_vs_plane}
\end{figure}

\begin{theorem}
\label{thm:quat_contractible}
The set of unit quaternions $\mathbb{H}$ minus a point $\tilde{q} \in \mathbb{H}$, denoted $\mathbb{H}\setminus{\tilde{q}}$, is contractible. 
\end{theorem}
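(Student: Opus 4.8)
The plan is to reduce the statement to \cref{thm:sphere_n} via the known homeomorphism between the unit quaternion set and the 3-sphere, and then to transport contractibility across that homeomorphism using the lemmas already in hand. Since $\mathbb{H}$ is homeomorphic to $\mathbb{S}^3$ \cite{lavalle2006planning}, I would begin by fixing a homeomorphism $g : \mathbb{H} \to \mathbb{S}^3$. The entire argument is then a short chain of three invocations of the preceding results, with no new topological content introduced.

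First I would remove the deleted point and carry it through $g$. Applying \cref{lm:remove_point} with $X = \mathbb{H}$, $Y = \mathbb{S}^3$, and the removed point $\tilde{q} \in \mathbb{H}$, I obtain
\begin{equation}
\mathbb{H} \setminus \tilde{q} \cong \mathbb{S}^3 \setminus g(\tilde{q}).
\end{equation}
The only thing to watch here is the orientation of $g$: choosing $X = \mathbb{H}$ (rather than $X = \mathbb{S}^3$) guarantees that the point deleted from the target sphere is exactly $g(\tilde{q})$, so no preimage under $g^{-1}$ needs to be computed.

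Second, I would identify the right-hand side as contractible. By \cref{thm:sphere_n} with $n = 3$, the space $\mathbb{S}^3 \setminus g(\tilde{q})$ is contractible; this is the case $n \geq 1$ of that theorem, discharged there by stereographic projection onto $\mathbb{R}^3$. Finally I would invoke \cref{lm:contract_homeomorphism} to push contractibility back across the homeomorphism of the first step: since $\mathbb{H} \setminus \tilde{q} \cong \mathbb{S}^3 \setminus g(\tilde{q})$ and the latter is contractible, so is $\mathbb{H} \setminus \tilde{q}$, which is exactly the claim of \cref{thm:quat_contractible}.

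I do not anticipate a genuine obstacle, since the result is essentially an assembly of the three preceding lemmas together with the cited homeomorphism $\mathbb{H} \cong \mathbb{S}^3$. The substantive step (the stereographic projection establishing contractibility of a punctured sphere) has already been carried out inside \cref{thm:sphere_n}, and homeomorphism invariance of contractibility is supplied by \cref{lm:contract_homeomorphism}. The single point demanding care is the bookkeeping of the direction of $g$ in the application of \cref{lm:remove_point}, so that the point removed from $\mathbb{S}^3$ is the image of $\tilde{q}$ and the chain of relations composes correctly.
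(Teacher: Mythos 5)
Your proposal is correct and follows exactly the paper's own proof: the homeomorphism $\mathbb{H} \cong \mathbb{S}^3$, then \cref{lm:remove_point} to get $\mathbb{H}\setminus\tilde{q} \cong \mathbb{S}^3\setminus p$, then \cref{thm:sphere_n} with $n=3$, and finally \cref{lm:contract_homeomorphism}. Your only refinement is naming the deleted point explicitly as $g(\tilde{q})$ where the paper writes ``for some point $p \in \mathbb{S}^3$,'' which is a harmless bookkeeping improvement rather than a different argument.
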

\begin{proof}
The set $\mathbb{H}$ is homeomorphic to $\mathbb{S}^3$ \cite{lavalle2006planning}. Therefore $\mathbb{H}\setminus{\tilde{q}} \cong \mathbb{S}^3\setminus{p}$ for some point $p \in \mathbb{S}^3$, according to \cref{lm:remove_point}. \cref{thm:sphere_n} with $n=3$ yields that $\mathbb{S}^3\setminus{p}$ is contractible, and because of the homeomorphic relation, \cref{lm:contract_homeomorphism} yields that $\mathbb{H}\setminus{\tilde{q}}$ is also contractible.
\end{proof}

It is noteworthy that the contractible subset $\mathbb{H}\setminus{\tilde{q}}$ is the largest possible subset of $\mathbb{H}$, because one point is the smallest possible subset to remove. Hence, it is guaranteed that no unnecessary restriction is made in \cref{thm:quat_contractible}, though there are other, more limited, subsets of $\mathbb{H}$ that are also contractible. Sometimes only half of $\mathbb{H}$, for instance the upper half of the quaternion hypersphere, is used to represent orientations. However, instead of continuous transitions between the half spheres this results in discontinuities within the upper half sphere \cite{lavalle2006planning}. In the context of DMPs and automatic control such discontinuities would cause severe obstructions, which motivates the search for the largest possible contractible subset of $\mathbb{H}$. One of the experiments (Setup~3 in \cref{sec:experiments}) provides an example of when both half spheres are necessary for a continuous representation of the robot orientation.

\section{Control Algorithm}
\label{sec:control_algorithm}
In this section, we augment the controller in \cite{karlsson2017dmp,karlsson2018convergence} to incorporate orientation in Cartesian space. The resulting algorithm can also be seen as a temporally coupled version of the Cartesian DMPs proposed in \cite{ude2014orientation}. The pose in Cartesian space consists of position and orientation. The position control in this paper is the same as described in \cite{karlsson2017dmp,karlsson2018convergence}, except that it is also affected by the orientation through the shared time parameter $\tau_a$ in this paper.

Similar to the approaches in \cite{ude2014orientation,ude1999filtering}, we define a difference between two quaternions, $q_1$ and $q_2$, as 
\begin{equation}
\label{eq:quat_diff}
d(q_1\bar{q}_2) = 2 \cdot \text{Im}[ \log(q_1\bar{q}_2)] \in \mathfrak{h}
\end{equation}
where $\mathfrak{h}$ is the orientation difference space, defined as the image of $d$, and Im denotes the imaginary quaternion part, assuming for now that $q_1\bar{q}_2 \neq (-1,0,0,0)$. 
This is elaborated on in \cref{sec:discussion}. Further, we will use a shorter notation, so that for instance 
\begin{equation}
d_{cg} = d(q_c\bar{q}_g) = 2 \cdot \text{Im}[ \log(q_c\bar{q}_g)]
\end{equation}
represents the difference between coupled and goal orientations. This mapping preserves the contractibility concluded in \cref{sec:contractible}, as established by \cref{thm:h_contractible}.
\begin{theorem}
\label{thm:h_contractible}
The orientation difference space $\mathfrak{h}$ is contractible.
\end{theorem}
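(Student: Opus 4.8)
The plan is to realize $\mathfrak{h}$ as a homeomorphic copy of the contractible space $\mathbb{H}\setminus\{(-1,0,0,0)\}$ supplied by \cref{thm:quat_contractible}, and then to conclude via \cref{lm:contract_homeomorphism}. The point is that $d$ may be read as a function of its single unit-quaternion argument $p=q_1\bar{q}_2$; the standing assumption $q_1\bar{q}_2\neq(-1,0,0,0)$ then says precisely that the domain of $d$ is $\mathbb{H}\setminus\{(-1,0,0,0)\}$, and $\mathfrak{h}$ is by definition its image. So it suffices to prove that $d:\mathbb{H}\setminus\{(-1,0,0,0)\}\to\mathfrak{h}$ is a homeomorphism.

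First I would write a generic unit quaternion in angle--axis form $p=(\cos\theta,\ \sin\theta\,\hat{n})$ with axis $\hat{n}\in\mathbb{S}^2$ and angle $\theta\in[0,\pi]$, so that $\log(p)=(0,\ \theta\,\hat{n})$ and hence $d(p)=2\theta\,\hat{n}$. Removing $p=(-1,0,0,0)$ restricts $\theta$ to $[0,\pi)$, which makes the image explicit: $\mathfrak{h}=\{v\in\mathbb{R}^3 : \lVert v\rVert_2 < 2\pi\}$, the open ball of radius $2\pi$. For injectivity I would observe that $d(p)=2\theta\,\hat{n}$ determines $\theta=\lVert d(p)\rVert_2/2\in[0,\pi)$ and, whenever $\theta\neq 0$, the axis $\hat{n}$ as the normalized direction; the degenerate case $\theta=0$ forces $p=(1,0,0,0)$. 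Thus distinct $p$ yield distinct $d(p)$, and surjectivity onto $\mathfrak{h}$ holds by the definition of $\mathfrak{h}$ as the image.

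Continuity of $d$ follows because the quaternion logarithm is continuous on $\mathbb{H}\setminus\{(-1,0,0,0)\}$ (its only singularity lies at the antipodal point, which has been removed), while $\text{Im}[\cdot]$ and scalar multiplication are continuous. For the inverse I would exhibit it explicitly as the quaternion exponential $d^{-1}(v)=\exp\!\big((0,\ v/2)\big)$, continuous on $\mathfrak{h}$, and verify $d\circ d^{-1}=\mathrm{id}$ and $d^{-1}\circ d=\mathrm{id}$ directly from the angle--axis expressions above. This establishes that $d$ is a homeomorphism, so $\mathbb{H}\setminus\{(-1,0,0,0)\}\cong\mathfrak{h}$.

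With the homeomorphism in hand, \cref{thm:quat_contractible} applied with $\tilde q=(-1,0,0,0)$ gives that the domain is contractible, and \cref{lm:contract_homeomorphism} transports this to $\mathfrak{h}$. The main obstacle is the homeomorphism claim itself, specifically verifying bicontinuity at the two degenerate loci: the identity $\theta=0$, where the axis $\hat{n}$ is undefined yet $d$ and $d^{-1}$ remain well behaved as $v\to 0$, and the approach $\theta\to\pi$ toward the excised antipode, near which $\log$ would otherwise fail to be single valued. (As a shortcut one could instead note that the explicit image is a convex open ball and hence contractible by a straight-line homotopy to its center, but routing through the established homeomorphism machinery keeps the argument uniform with \cref{sec:contractible}.)
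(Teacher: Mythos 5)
Your proposal is correct and follows essentially the same route as the paper's own proof: establish that $d$ is a homeomorphism from $\mathbb{H}\setminus(-1,0,0,0)$ onto $\mathfrak{h}$, invoke \cref{thm:quat_contractible} for contractibility of the domain, and transport it to the image via \cref{lm:contract_homeomorphism}. You simply fill in details the paper delegates to citations (injectivity, the explicit angle--axis form, and the identification of $\mathfrak{h}$ as the open ball of radius $2\pi$), which strengthens rather than changes the argument.
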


\begin{proof}
The mapping 
\begin{equation}
d \hspace{2mm} : \hspace{2mm} \mathbb{H}\setminus (-1,0,0,0) \rightarrow \mathfrak{h}
\end{equation}
has the properties necessary to qualify as a homeomorphism. It is one-to-one \cite{ude1999filtering} and onto, continuous (since the point $(-1,0,0,0)$ has been removed), and its inverse (division by 2 followed by the exponential map) is also continuous. Further, its domain $\mathbb{H}\setminus (-1,0,0,0)$ is contractible (see \cref{thm:quat_contractible}), and therefore its image $\mathfrak{h}$ is contractible (see \cref{lm:contract_homeomorphism}).
\end{proof}

Using the function $d$, a coupled DMP pose trajectory is modeled by the dynamical system
\begin{align}
\label{eq:dotz}
\tau_a \dot{z} &= \alpha_z(\beta_z(g-y_c)-z) + f_p(x) \\
\tau_a \dot{y}_c &= z  \\
\tau_a \dot{\omega}_z &= \alpha_z(\beta_z (-d_{cg})-\omega_z) + f_o(x) \\
\tau_a \omega_c &= \omega_z
\end{align}
Here, $x$ is a phase variable that evolves as 
\begin{align}
\tau_a \dot{x} =& -\alpha_x x
\label{eq:x}
\end{align}
Further, $f_o(x)$ is a virtual forcing term in the orientation domain, and each element $i$ of $f_o(x)$ is given by
\begin{align}
f^i_o(x) =& \frac{\sum_{j=1}^{N_b} \Psi_{i,j}(x)w_{i,j}}{\sum_{j=1}^{N_b} \Psi_{i,j}(x)} x \cdot d_i(q_g \bar{q}_0)
\label{eq:f}
\end{align}
where each basis function, $\Psi_{i,j}(x)$, is determined as
\begin{align}
\Psi_{i,j}(x) =& \exp \left(-\frac{1}{2\sigma_{i,j}^2}(x-c_{i,j})^2 \right)
\label{eq:psi}
\end{align}
Here, $\sigma$ and $c$ denote the width and center of each basis function, respectively. The forcing term $f_p(x)$ is determined accordingly, see \cite{karlsson2017dmp,karlsson2018convergence}. Further, the parameters of $f(x)$ can be determined based on a demonstrated trajectory by means of locally weighted regression \cite{atkeson1997locally}, as described in \cite{ijspeert2013dynamical}. 

All dimensions of the robot pose are temporally coupled through the shared adaptive time parameter $\tau_a$. Denote by $y_a$ the actual position of the robot, and by $q_a$ the actual orientation. The adaptive time parameter $\tau_a$ is determined based on the low-pass filtered difference between the actual and coupled poses as follows.
\begin{align}
\dot{e}_p &= \alpha_e(y_a - y_c - e_p) \\
\dot{e}_o &= \alpha_e(d_{ac} - e_o)  \\
\label{eq:edot} 
e &= [e_p^T \hspace{2mm}  e_o^T]^T \\
\tau_a &= \tau(1 + k_c e^Te)
\label{eq:taua}
\end{align}
This causes the evolution of the coupled system to slow down in case of configuration deviation; see \cite{ijspeert2013dynamical,karlsson2017dmp}. Moreover, the controller below is used to drive $y_a$ to $y_c$, and $q_a$ to $q_c$.
\begin{align}
\label{eq:our_ddoty}
\ddot{y}_r &= k_p(y_c-y_a) + k_v(\dot{y}_c - \dot{y}_a) + \ddot{y}_c \\
\dot{\omega}_r &= -k_p d_{ac} - k_v(\omega_a-\omega_c)+\dot{\omega}_c
\label{eq:our_dotomega}
\end{align}
This can be seen as a pose PD controller together with the feedforward terms $\ddot{y}_c$ and $\dot{\omega}_c$. Here, $\ddot{y}_r$ and $\dot{\omega}_r$ denote reference accelerations sent to the internal controller of the robot, after conversion to joint values using the robot Jacobian \cite{spong2006robot}. We let $k_p = k_v^2/4$, so that \cref{eq:our_ddoty}~--~\cref{eq:our_dotomega} represent a critically damped control loop. Similarly, $\beta_z=\alpha_z / 4$ \cite{ijspeert2002humanoid}. The control system is schematically visualized in \cref{fig:coupling_scheme}. We model the 'Robot' block as a double integrator, so that $\ddot{y}_a=\ddot{y}_r$ and $\dot{\omega}_a= \dot{\omega}_r$, as justified in \cite{karlsson2018convergence} for accelerations with moderate magnitudes and changing rates. In summary, the proposed control system is given by
\begin{align}
\label{eq:cartesian_entire_control_a}
\ddot{y} &= -k_\text{p}(y-y_\text{c}) - k_\text{v} (\dot{y}-\dot{y}_\text{c}) + \ddot{y}_\text{c} \\
\label{eq:cartesian_entire_control_b}
\dot{\omega}_\text{a} &= -k_\text{p} d_\text{ac} - k_\text{v}(\omega_\text{a}-\omega_\text{c})+\dot{\omega}_\text{c} \\
\label{eq:cartesian_entire_control_c}
\dot{e} &= \alpha_e\left(\left[ \left[ y-y_\text{c} \right]^T \phantom{d} d_\text{ac}^T\right]^T - e\right) \\
\tau_\text{a} &= \tau (1 + k_\text{c} e^Te) \\
\tau_\text{a} \dot{x} &= -\alpha_x x \\
\tau_\text{a} \dot{y}_\text{c} &= z  \\
\tau_\text{a} \dot{z} &= \alpha(\beta(g-y_\text{c})-z) + f_\text{p}(x) \\
\tau_\text{a} \omega_\text{c} &= \omega_z \\
\label{eq:cartesian_entire_control_last}
\tau_\text{a} \dot{\omega}_z &= \alpha(\beta(-d_{\text{c}g})-\omega_z) + f_\text{o}(x)
\end{align}
We introduce a state vector $\xi$ as
\begin{equation}
\xi = 
\begin{pmatrix}
y - y_\text{c} \\[3pt] \dot{y}-\dot{y}_\text{c} \\[3pt] d_\text{ac} \\[3pt] \omega_\text{a}-\omega_\text{c} \\[3pt] e \\[3pt] x \\[3pt] y_\text{c}-g \\[3pt] z \\[3pt]d_{\text{c}g} \\[3pt] w_z
\end{pmatrix}
\in \mathbb{R}^{22} \times \mathfrak{h}^3
\label{eq:cartesian_states_first_time}
\end{equation}

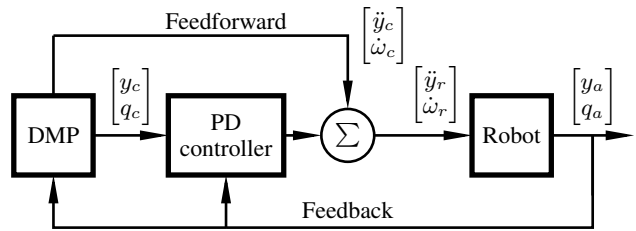
\begin{figure}
	\centering
    \tikzset{block/.style={draw, rectangle, line width=2pt,
     minimum height=3em, minimum width=3em, outer sep=0pt}}
\tikzset{sumcircle/.style={draw, circle, outer sep=0pt, 
     label=center:{{$\sum$}}, minimum width=2em}}
\tikzset{every picture/.style={auto, line width=1pt,
          >=narrow,font=\small}}

\begin{tikzpicture}
\node[block](Robot){Robot};
\node[sumcircle, left=13mm of Robot](sum3){};
\node[block, left=5mm of sum3,text width=1.3cm,align=center](PD){PD controller};
\node[block, left=10mm of PD](DMP){DMP};
\coordinate[right=11mm of Robot](c1);
\draw[->](DMP)--node[](yc){$\begin{bmatrix} y_c \\ q_c \end{bmatrix}$}(PD);
\draw[->](PD)--(sum3);
\draw[->](sum3)--node[](yardd){ }(Robot);
\draw[->](Robot)--node[](ya){$\begin{bmatrix} y_a \\ q_a \end{bmatrix}$}(c1);

\node[above left=-5mm and 0mm of Robot](tt){$\begin{bmatrix} \ddot{y}_{r} \\ \dot{\omega}_r \end{bmatrix}$};

\node[below=4mm of sum3](tt2){Feedback};
\node[above=7.5mm of PD](tt2){Feedforward};

\coordinate[below=8mm of ya](c2);
\coordinate[below=7mm of PD](c3);
\coordinate[above=8mm of DMP](c4);

\draw[-](DMP)--(c4);
\draw[->](c4)-|node[](ycdd){$\begin{bmatrix} \ddot{y}_c \\ \dot{\omega}_c \end{bmatrix}$}(sum3);
\draw[-](ya)--(c2);
\draw[-](ya)|-(c3);
\draw[->](c3)-|(DMP);
\draw[->](c3)-|(PD);
\end{tikzpicture}
    \caption{The control structure for temporally coupled Cartesian DMPs. The block denoted 'Robot' includes the internal controller of the robot, together with transformations between Cartesian and joint space for low-level control. The 'DMP' block corresponds to the computations in \cref{eq:dotz}~--~\cref{eq:taua}. The PD controller and the feedforward terms are specified in \cref{eq:our_ddoty}~--~\cref{eq:our_dotomega}. This forms a cascade controller, with the DMP as outer controller and the PD as the inner.}
\label{fig:coupling_scheme}
\end{figure}

\section{Experiments}
\label{sec:experiments}
The control law in \cref{sec:control_algorithm} was implemented in the Julia programming language \cite{BEKS14}, to control an ABB YuMi \cite{yumi} robot. The Julia program communicated with the internal robot controller through a research interface version of Externally Guided Motion (EGM) \cite{egm,bagge2017yumi} at a sampling rate of \SI{250}{Hz}. 

Three different setups were used to investigate the behavior of the controller. As preparation for each setup, a temporally coupled Cartesian DMP had been determined from a demonstration by means of lead-through programming, which was available in the YuMi product by default. In each trial, the temporally coupled DMP was executed while the magnitudes of the states in \cref{eq:cartesian_states_first_time} were logged.

Perturbations were introduced by physical contact with a human. This was enabled by estimating joint torques induced by the contact, and mapping these to Cartesian contact forces and torques using the robot Jacobian. A corresponding acceleration was then added to the reference acceleration $\ddot{y}_r$ as a load disturbance. However, we emphasize that this paper is not focused on how to generate the perturbations themselves. Instead, that functionality was used only as an example of unforeseen deviations, and to investigate the stability properties of the proposed control algorithm.

A video of the experimental arrangement is available as an attachment to this paper, and a version with higher resolution is available in \cite{cartesian_dmp_youtube}. The setups were as follows.

\begin{figure}
\centering
\begin{minipage}{.48\columnwidth}
\centering
\includegraphics[width=\columnwidth,height=28mm]{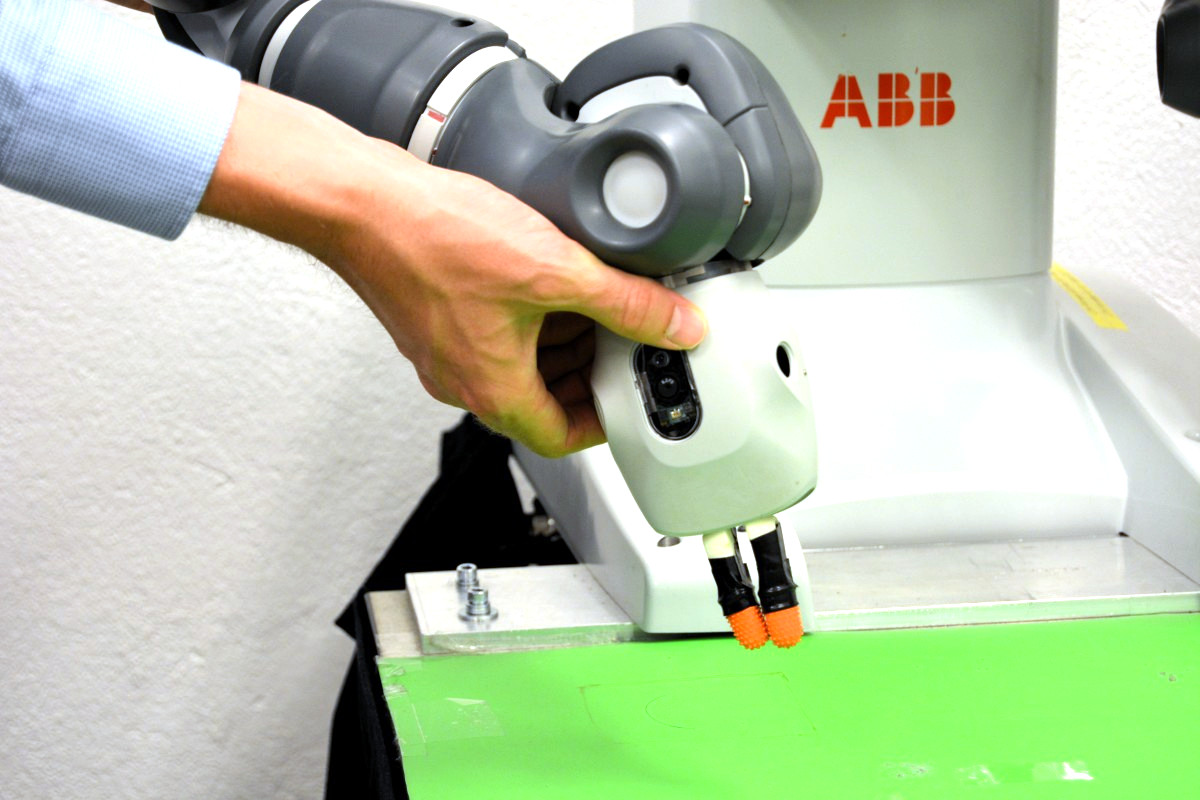}
\subcaption{}
\end{minipage}\hfill
\begin{minipage}{.48\columnwidth}
\includegraphics[width=\columnwidth]{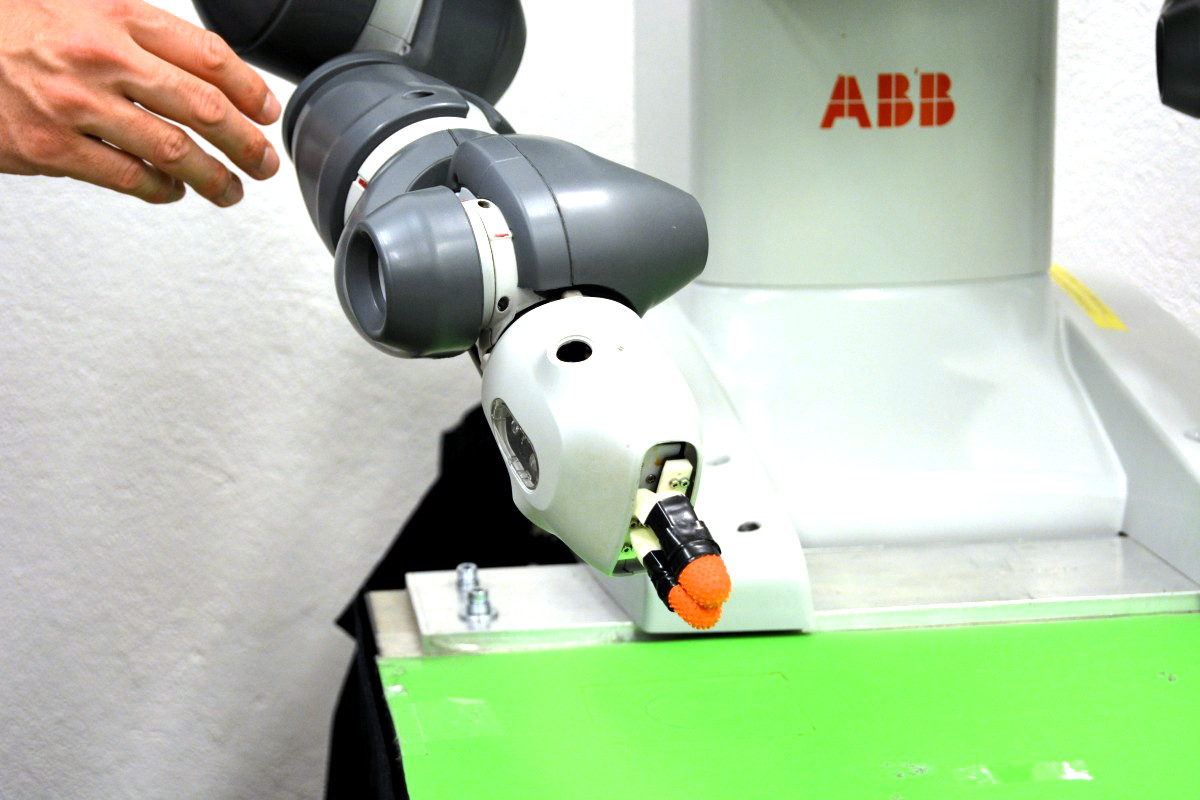}
\subcaption{}
\end{minipage}
\caption{Photographs of a trial of Setup 1. The robot was initially released from the pose in (a), with an offset to the goal pose. In (b), the goal pose was reached.}
\label{fig:setup1}
\end{figure}

\textbf{Setup 1.} This setup is visualized in \cref{fig:setup1}. Prior to the experiment, a test DMP that did not perform any particular task was executed, and the robot then converged to the goal pose, \emph{i.e.}, to $y_a=y_c=g$ and $d_{ac}=d_{cg}=0$. Thereafter, the operator pushed the end-effector, so that the actual pose deviated from the coupled and goal poses. The experiment was initialized when the operator released the robot arm. The purpose of this procedure was to examine the stability of the subsystem in (\ref{eq:cartesian_entire_control_a})~--~(\ref{eq:cartesian_entire_control_c}). A total of 100 perturbations were conducted.

\textbf{Setup 2.} See \cref{fig:setup2}. The task of the robot was to reach a work object (in this case a gore-tex graft used in cardiac and vascular surgery) from its home position. A DMP defined for this purpose was executed, and the operator introduced two perturbations during the robot movement. The purpose of this setup was to investigate the stability of the entire control system in (\ref{eq:cartesian_entire_control_a})~--~(\ref{eq:cartesian_entire_control_last}). A total of 10 trials were conducted.

\textbf{Setup 3.} See \cref{fig:setup3}. The task of the robot was to hand over the work object from its right arm to its left.  The movement was specifically designed to require an end-effector rotation angle of more than $\pi$, thus requiring both the upper and the lower halves of the quaternion hypersphere (see \cref{fig:result_quat_spheres}), and not only one of the halves which is sometimes used \cite{lavalle2006planning}. Such movements motivate the search for the largest possible contractible subset of $\mathbb{H}$ in \cref{sec:contractible}. Similar to Setup~2, the purpose was to investigate the stability of (\ref{eq:cartesian_entire_control_a})~--~(\ref{eq:cartesian_entire_control_last}), and 10 trials were conducted.

\begin{figure}
\centering
\begin{minipage}{.48\columnwidth}
\centering
\includegraphics[width=\columnwidth,height=28mm]{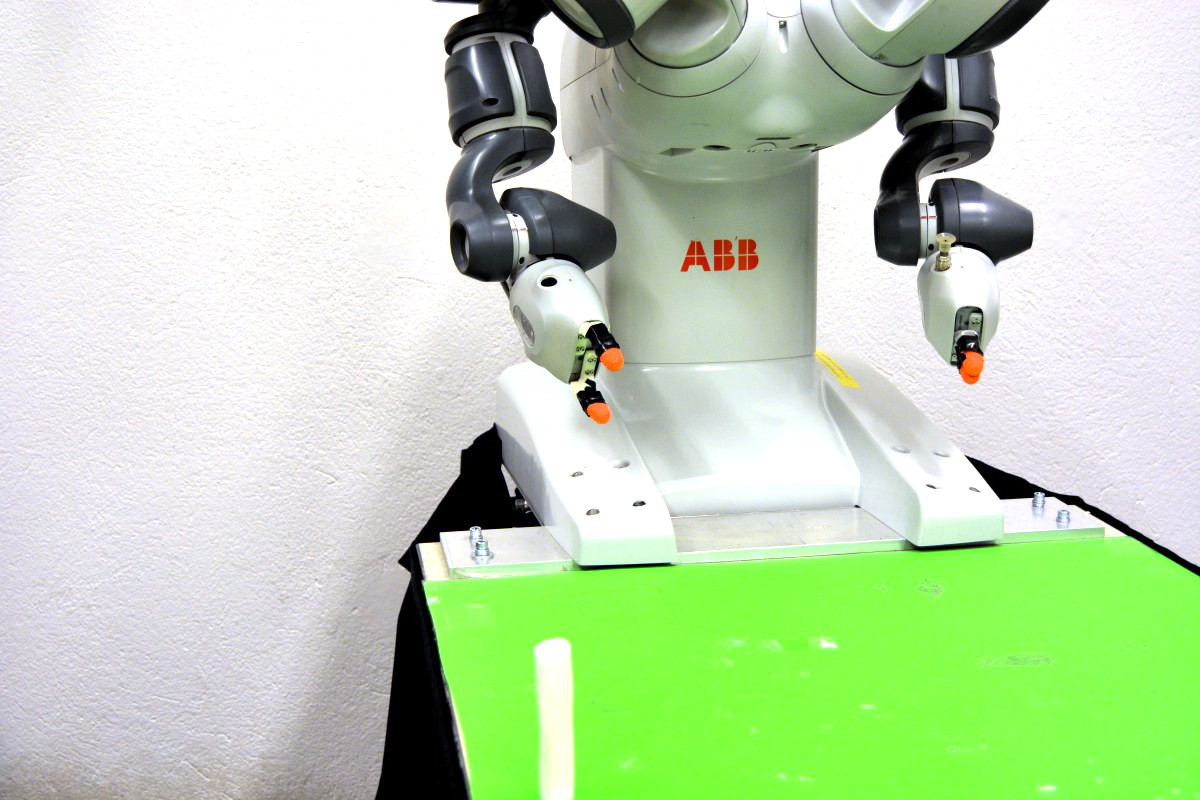}
\subcaption{}
\end{minipage}\hfill
\begin{minipage}{.48\columnwidth}
\includegraphics[width=\columnwidth]{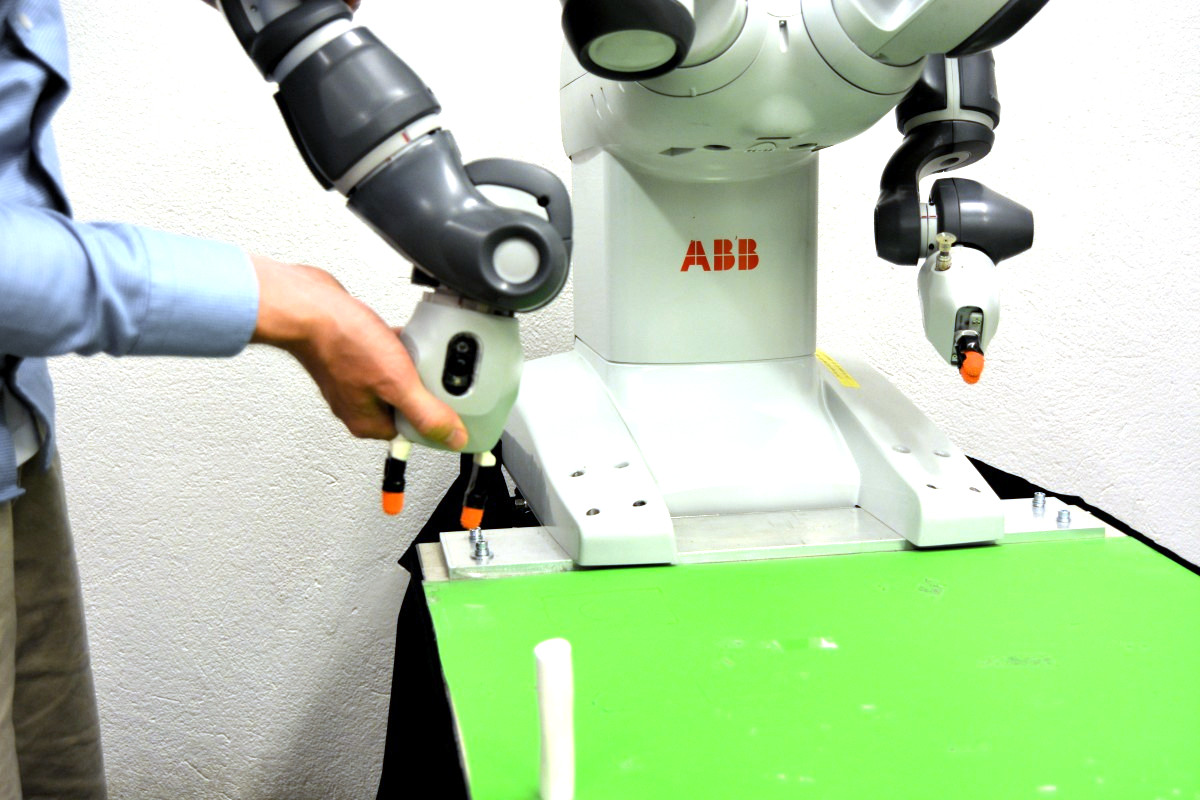}
\subcaption{}
\end{minipage}
\par\vspace{3mm} 
\begin{minipage}{.48\columnwidth}
\includegraphics[width=\columnwidth]{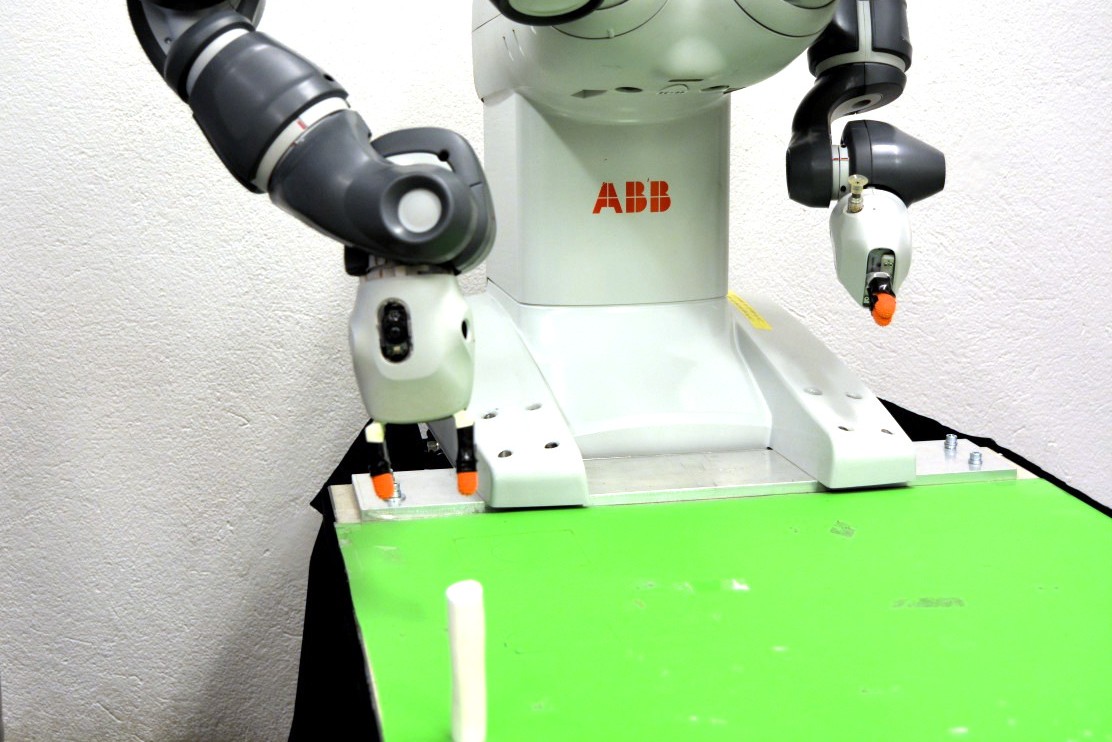}
\subcaption{}
\end{minipage}\hfill
\begin{minipage}{.48\columnwidth}
\centering
\includegraphics[width=\columnwidth]{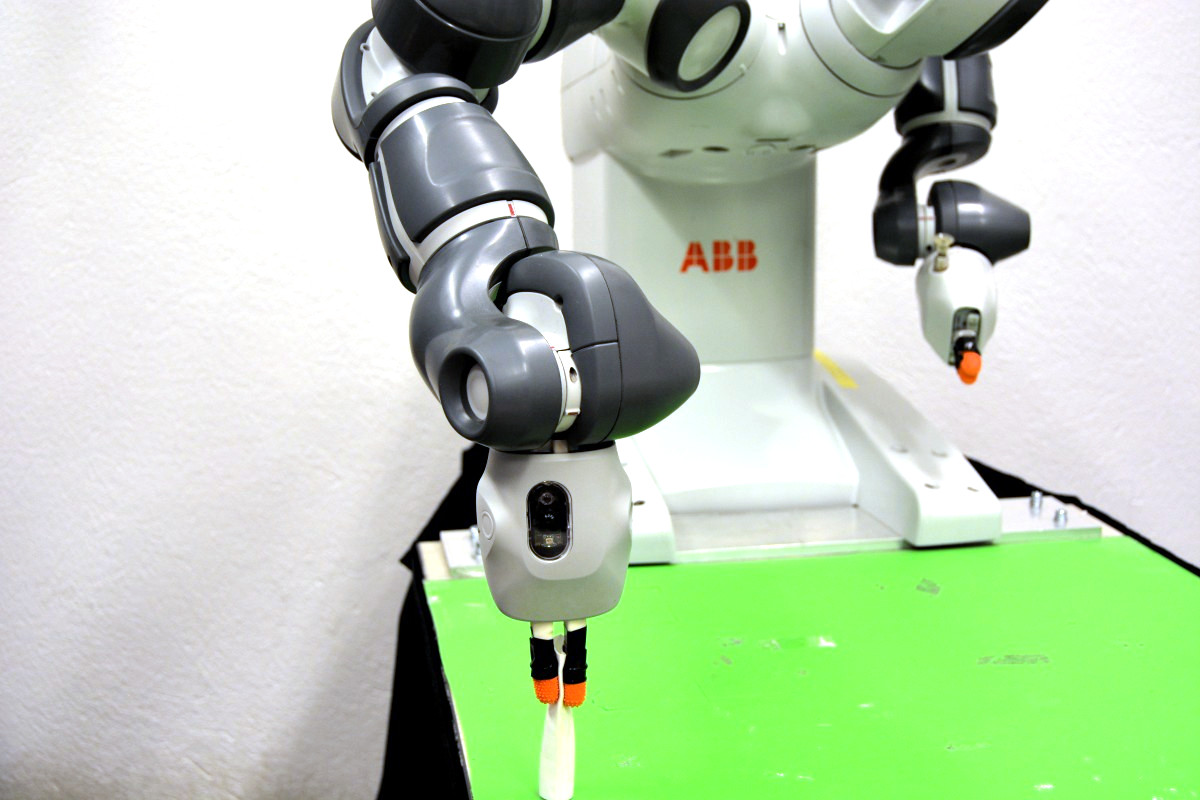}
\subcaption{}
\end{minipage}
\caption{Photographs of a trial of Setup 2. The DMP was executed from the home position (a), and was perturbed twice on its way toward the goal (b). It recovered from these perturbations (c), and reached the goal at the work object (d).}
\label{fig:setup2}
\end{figure}

\begin{figure}
\centering
\begin{minipage}{.48\columnwidth}
\centering
\begin{tikzpicture}
\node[anchor=south west,inner sep=0] at (0,0) {\includegraphics[width=\columnwidth,height=28mm]{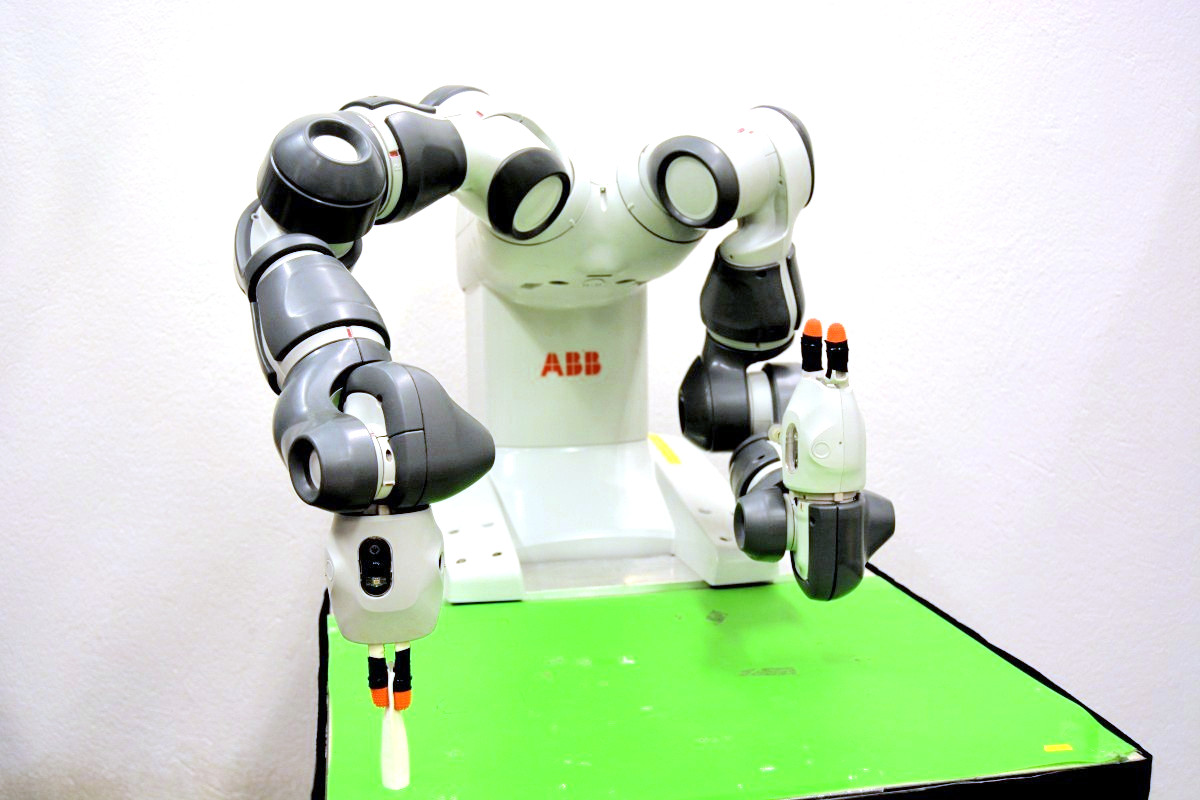}};
\draw [->,line width=1.5pt,red] (1.4,.5) arc[x radius=.5cm, y radius =.5cm, start angle=270, end angle=90];
\end{tikzpicture}
\subcaption{}
\end{minipage}\hfill
\begin{minipage}{.48\columnwidth}
\includegraphics[width=\columnwidth]{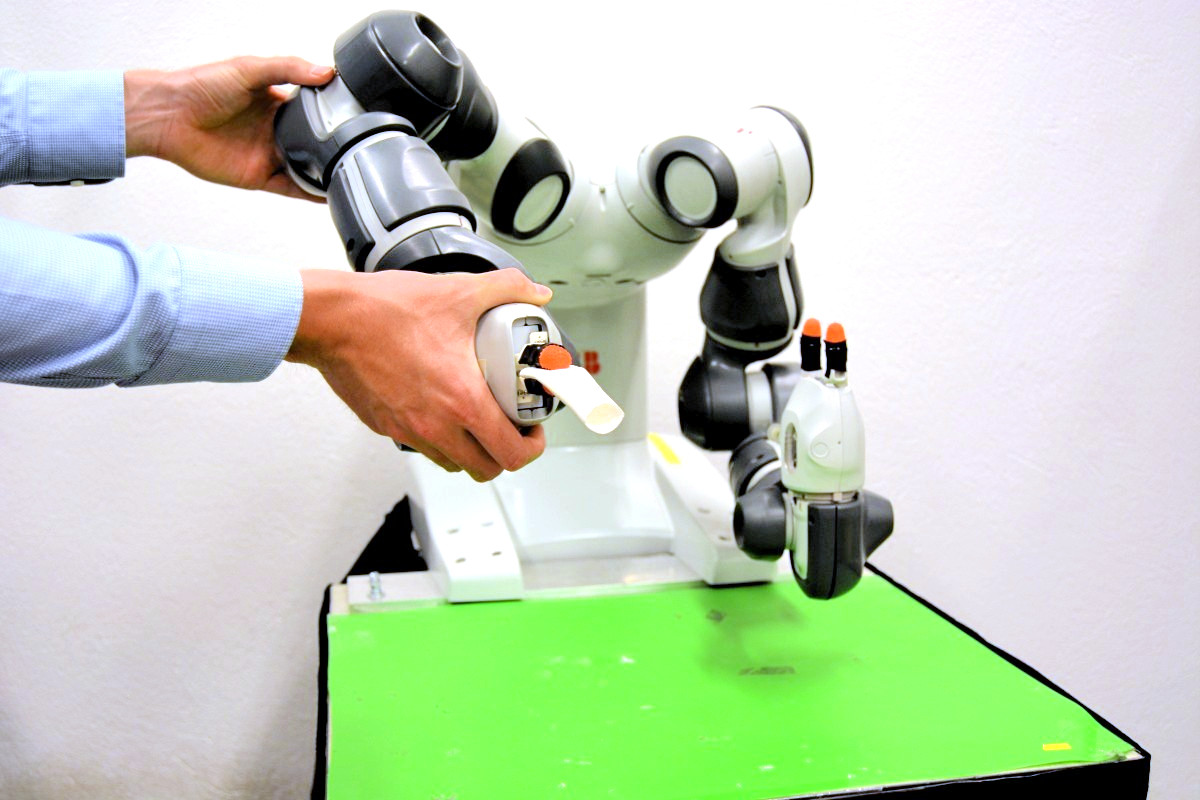}
\subcaption{}
\end{minipage}
\par\vspace{3mm} 
\begin{minipage}{.48\columnwidth}
\begin{tikzpicture}
\node[anchor=south west,inner sep=0] at (0,0) {\includegraphics[width=\columnwidth,height=28mm]{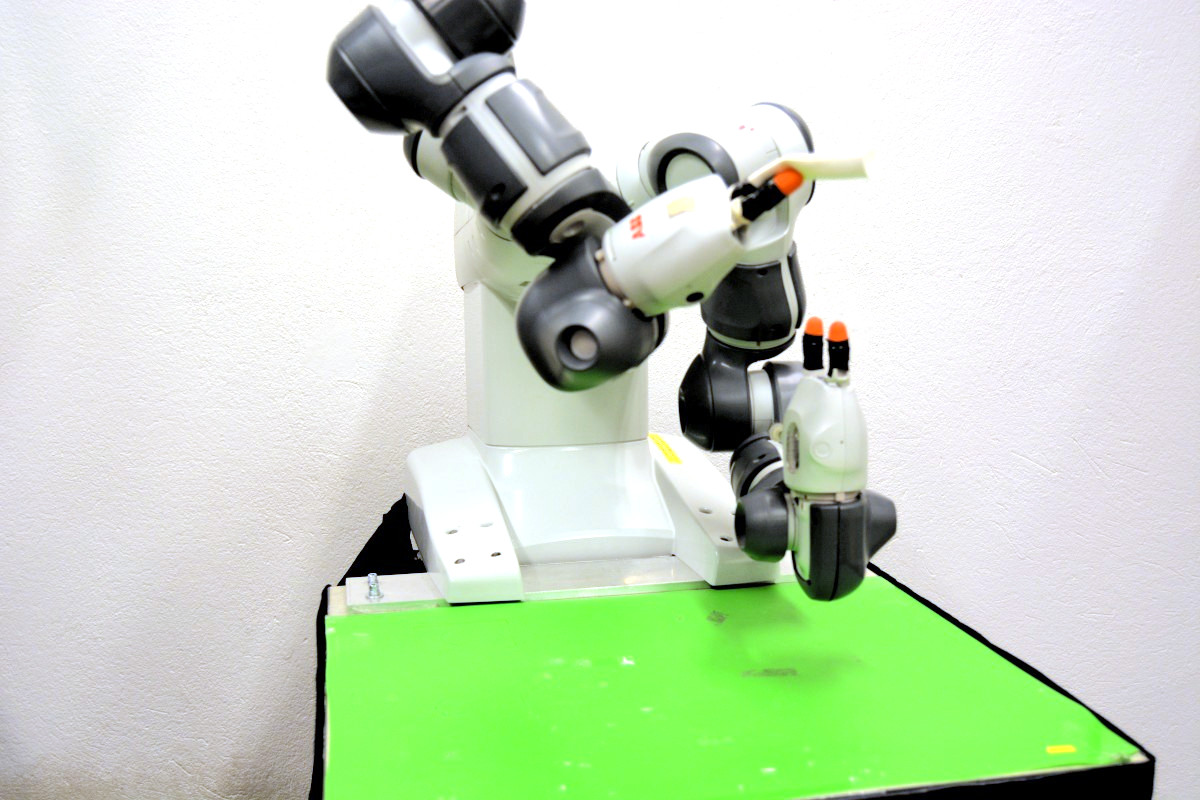}};
\draw [->,line width=1.5pt,red] (2,2.3) arc[x radius=0.6cm, y radius =.6cm, start angle=90, end angle=0];
\end{tikzpicture}\subcaption{}
\end{minipage}\hfill
\begin{minipage}{.48\columnwidth}
\centering
\includegraphics[width=\columnwidth]{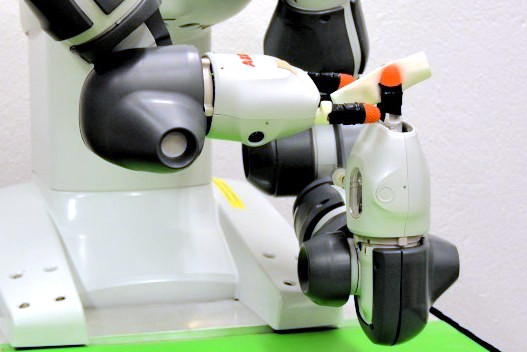}
\subcaption{}
\end{minipage}
\caption{Photographs of a trial of Setup 3. The robot started its movement from the configuration in (a). The end-effector was rotated as indicated by the red arrows, which resulted in a rotation larger than $\pi$ from start to goal. The robot was perturbed twice by the operator (b), recovered and continued its movement (c), and accomplished the handover (d).}
\label{fig:setup3}
\end{figure}

\begin{figure}[h]
	\centering	
	\input{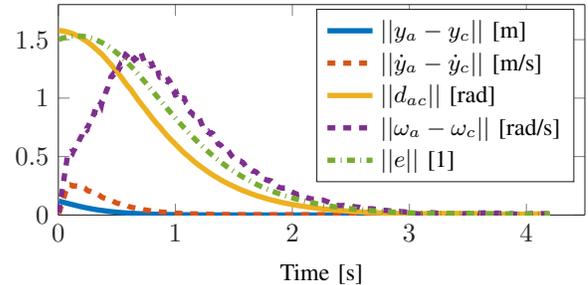}
	\caption{Data from a trial of Setup 1. The notation $\Vert\cdot\Vert$ represents the 2-norm, and the unit symbol [1] indicates dimensionless quantity. The experiment was initialized with some position error $y_a-y_c$ and orientation error $d_{ac}$. The operator released the robot at $t=0$. It can be seen that each state converged to 0.}
	\label{fig:setup1_plot}
\end{figure}

\section{Results}
\label{sec:results}
Figures~\ref{fig:setup1_plot}--\ref{fig:result_quat_spheres} display data from the experiments. \Cref{fig:setup1_plot} shows the magnitude of the states during a trial of Setup~1, and it can be seen that each state converged to 0 after the robot had been released. Similarly, \cref{fig:grasp_states,fig:handover_states} show data from Setup~2 and 3 respectively, and it can be seen that the robot recovered from each of the perturbations. Further, each state subsequently converged to 0. All trials in a given setup gave similar results. Further, these results suggest that the control system (\ref{eq:cartesian_entire_control_a})~--~(\ref{eq:cartesian_entire_control_last}) is exponentially stable.

\begin{figure}
	\centering	
	\input{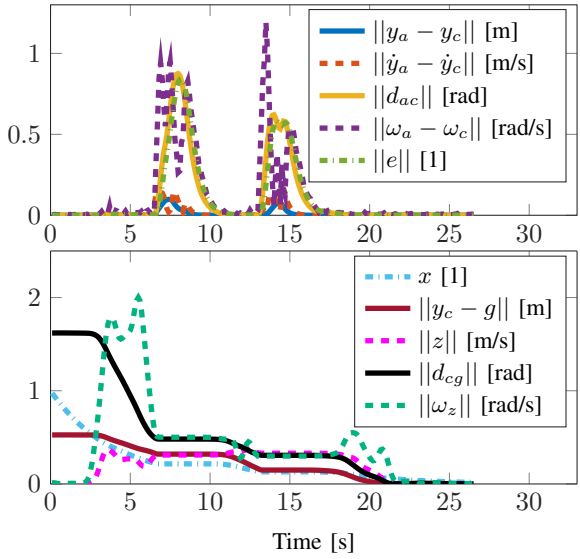}
	\caption{Data from a trial of Setup 2. Consider first the upper plot. The two perturbations are clearly visible, and these were recovered from as the states converged to 0. In the lower plot, it can be seen that the time evolution of the states was slowed down in the presence of perturbations. It can further be seen that each of the states converged to 0.}
	\label{fig:grasp_states}
\end{figure}

\Cref{fig:result_quat_spheres} shows orientation data from Setup~2 (left) and Setup~3 (right). The upper plots show quaternions for the demonstrated paths, $q_d$, determined using lead-through programming prior to the experimental trials, relative to the goal quaternions $q_g$. The middle plots show coupled orientations $q_c$ relative to $q_g$. It can be seen that the paths of $q_d$ and $q_c$ were similar for each of the setups, which was expected given a sufficient number of DMP basis functions. The perturbations can be seen in the bottom plots, which show $q_a$ relative to $q_c$. Though $q_a \bar{q}_c$ was very close to the identity quaternion for most of the time, it deviated significantly twice per trial as a result of the perturbations. Setup~3 is an example of a movement where it would not be possible to restrict the quaternions to the upper half sphere, without introducing discontinuities. This is shown in \Cref{fig:result_quat_spheres}, as quaternions were present not only on the upper half sphere, but also on the lower, for Setup~3.

\section{Discussion}
\label{sec:discussion}
In each of the experiments, the robot recovered from the perturbations and subsequently reached the goal pose, which was the desired behavior. Further, the behavior corresponded to that in \cite{karlsson2017dmp,karlsson2018convergence}, except that orientations in Cartesian space are now supported. Most of the discussion in \cite{karlsson2017dmp,karlsson2018convergence} is therefore valid also for these results, and is not repeated here.

A mathematical proof that the proposed control system is exponentially stable would enhance the contribution of this paper, but remains as future research. Nevertheless, is has now been shown that the topology of $\mathfrak{h}$ does not prohibit a globally exponentially stable control system. One may object that this topological result is not directly necessary for the control design in \cref{sec:control_algorithm}. However, it is still useful because it rules out the otherwise possible obstruction of a non-contractible state space. This result is relevant not only for DMP applications, but for any control application where quaternions are used to represent orientation. Furthermore, the experimental results indicate exponential stability, since in practice the DMP states converged to 0.

The magnitude of the difference between two quaternions, $||d(q_1 \bar{q}_2)||$, corresponds to the length of a geodesic curve connecting $q_1$ and $q_2$ \cite{ude1999filtering}. This results in proper scaling between orientation difference and angular velocity in the DMP control algorithm, as explained in \cite{ude2014orientation}. This is the reason why the quaternion difference in \cref{eq:quat_diff} was used in \cite{ude2014orientation} and in this paper. 

In \cref{sec:contractible}, the largest possible contractible subset of $\mathbb{H}$ was found as $\mathbb{H} \setminus \tilde{q}$. Hence, it is not necessary to remove a large proportion of the quaternion set, which is sometimes done. For instance, sometimes the lower half of the quaternion hypersphere is removed \cite{lavalle2006planning}, which is unnecessarily limiting. The results from Setup~3 show that this proposed method works also when it is necessary to use both half spheres, see \cref{fig:result_quat_spheres}. In \cref{sec:control_algorithm}, the removed point $\tilde{q}$ was chosen as $(-1,0,0,0)$, which corresponds to a full $2 \pi$ rotation from the identity quaternion. A natural question is therefore how to handle the case where $(-1,0,0,0)$ is visited by $q_a \bar{q}_c$ or $q_c \bar{q}_g$. In theory, almost any control signal could be used to move the orientations away from this point, and in practice a single point would never be visited because it is infinitely small. However, in practice some care should be taken in a small region around $(-1,0,0,0)$, because of possible numerical difficulties and rapidly changing control signals.

In this paper, the same control gains were used in the position domain as in the orientation domain. This was done in order to limit the notation, but is not actually required.

An interesting direction of future work is to use the proposed controller to warm start reinforcement learning approaches for robotic manipulation. Reinforcement learning with earlier DMP versions has been investigated in, \textit{e.g.}, \cite{stulp2011learning,stulp2012reinforcement,stulp2012model,li2018reinforcement}.

\section{Conclusion}
In this paper, it was first shown that the unit quaternion set minus one point is contractible, thus allowing for continuous and asymptotically stable control systems. This was used to design a control algorithm for DMPs with temporal coupling in Cartesian space. The proposed DMP functionality was verified experimentally on an industrial robot.

A video that shows the experiments is provided as an attachment to this paper, and a version with higher resolution is available in \cite{cartesian_dmp_youtube}.

\begin{figure}[h]
	\centering	
	\input{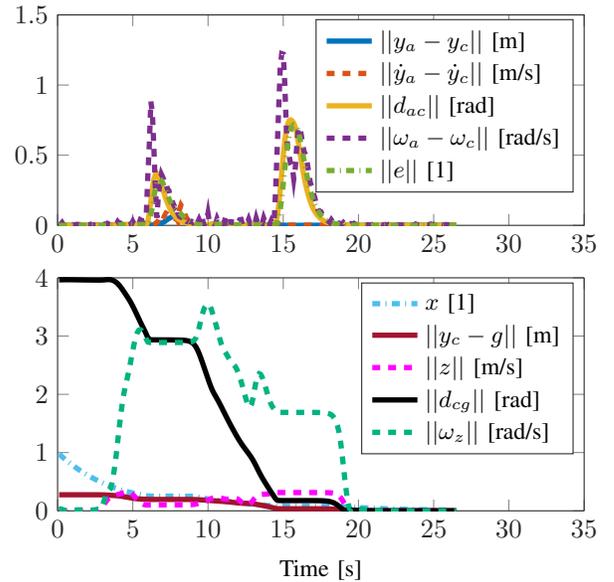}
	\caption{Data from a trial of Setup 3. The organization is the same as in \cref{fig:grasp_states}, and similar conclusions can be drawn. In addition, the required rotation angle from start to goal was larger than $\pi$ in this setup, which corresponds to $||d_{cg}||$ being larger than $\pi$ initially.}
	\label{fig:handover_states}
\end{figure}

\begin{figure}
	\centering	
	\input{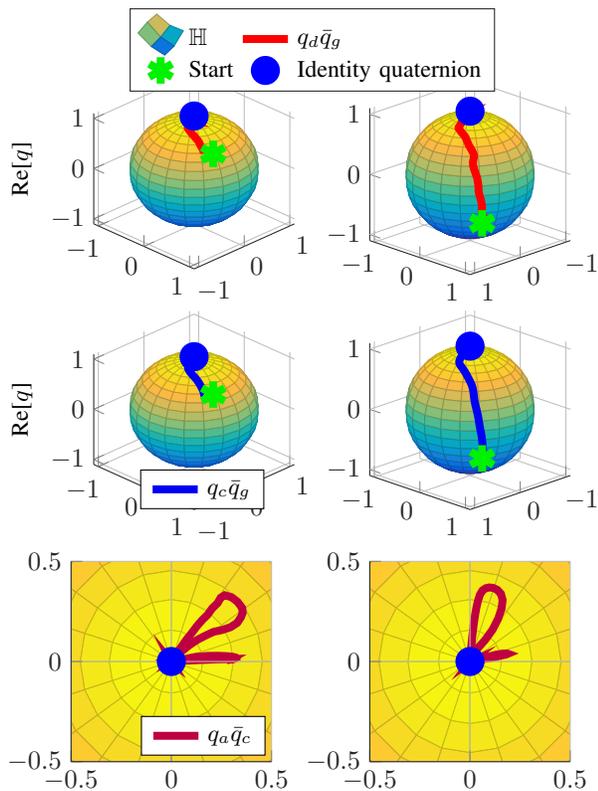}
	\caption{Orientation data from Setup~2 (left) and Setup~3 (right). Quaternions have been projected on $\mathbb{S}^2$ for the purpose of visualization. Vertical axes represent quaternion real parts, and horizontal axes represent the first two imaginary elements with magnitudes adjusted to yield unit length of the resulting projection. The bottom plots show the quaternion set seen from above, and hence their real axes are directed out from the figure.}
	\label{fig:result_quat_spheres}
\end{figure}

\bibliographystyle{IEEEtran}
\bibliography{cartesian_dmps}
\end{document}